\documentclass[conference]{IEEEtran}
\usepackage{cite}
\usepackage{amsmath,amssymb,amsfonts,amsthm}
\usepackage{algpseudocode}
\usepackage{algorithmicx}
\usepackage{algorithm} 
\usepackage{graphicx}

\usepackage{tikz}
\usetikzlibrary{automata, positioning, arrows,shapes.multipart,fit}

\usepackage{textcomp}
\usepackage{xcolor}
\newtheorem{theorem}{Theorem}

\newtheorem{definition}{Definition}

\usepackage{ifpdf}
\usepackage[normalem]{ulem}

\newcommand{\blue}[1]{#1}
\newcommand{\norm}[1]{\left\lVert#1\right\rVert}
\newcommand{\rev}[1]{#1}

\begin{document}

\title{On the Differentially Private Nature of Perturbed Gradient Descent}

\author{\IEEEauthorblockN{Thulasi Tholeti}
\IEEEauthorblockA{\textit{Department of Electrical Engineering} \\
\textit{IIT Madras}\\
India \\
ee15d410@ee.iitm.ac.in}
\and
\IEEEauthorblockN{Sheetal Kalyani}
\IEEEauthorblockA{\textit{Department of Electrical Engineering} \\
\textit{IIT Madras}\\
India \\
skalyani@ee.iitm.ac.in}
}

\maketitle

\begin{abstract}
We consider the problem of empirical risk minimization given a database, using the gradient descent algorithm. We note that the function to be optimized may be non-convex, consisting of saddle points which impede the convergence of the algorithm. A perturbed gradient descent algorithm is typically employed to escape these saddle points. We show that this algorithm, that perturbs the gradient, inherently preserves the privacy of the data. We then employ the differential privacy framework to quantify the privacy hence achieved. We also analyze the change in privacy with varying parameters such as problem dimension and the distance between the databases.
\end{abstract}

\begin{IEEEkeywords}
differential privacy, perturbed gradient descent, empirical risk minimization
\end{IEEEkeywords}

\section{Introduction}
Given the abundant amount of data openly available about various aspects of an individual, privacy has become one of the major concerns while handling data. Differential privacy is a privacy guarantee on preserving the privacy of an individual when a statistical database is publicly released \cite{dwork2014algorithmic}. When a differentially private mechanism is applied to a pair of databases that differ by a single record, an external agent should ideally not be able to identify the presence or absence of that record. Differential privacy quantifies the extent to which this guarantee is preserved. Differential privacy guarantees are now being provided in various problems such as online learning \cite{jain2012differentially}, Empirical Risk Minimization (ERM) \cite{Kifer2012,zhang2017efficient,Song2013}, boosting \cite{dwork2010boosting}, matrix factorization \cite{berlioz2015applying}, etc. Typically, these mechanisms achieve privacy by adding noise or perturbation at the input, output or an intermediate step in the mechanism \cite{Sarwate2013}. In this work, we quantify the privacy of ERM where the stochastic gradient descent updates are perturbed.

There has been renewed interest in the convergence of first-order iterative optimization methods for non-convex functions. It has been observed in \cite{Jin2017a} that in many non-convex problems such as tensor decomposition, dictionary learning, matrix retrieval, etc., the presence of saddle points impedes the convergence of the stochastic gradient descent (SGD) algorithm. It has also been suggested that local minima can be as good as global minima in high dimensional neural networks but saddle points cause a bottleneck in convergence in \cite{choromanska2015loss}. Therefore, variants of the gradient descent algorithms have been proposed to accelerate the convergence of SGD in the presence of saddles. In \cite{Ge2015}, the authors suggest adding noise from the surface of the unit sphere to the gradient so as to escape saddle points. In \cite{Jin2017a}, noise from a unit sphere is added when the magnitude of the gradient is below a certain threshold. A modified version of \cite{Jin2017a} is also proposed in \cite{daneshmand2018escaping} where gradient descent and SGD are interleaved. In all the above works, we note the following pattern: a sample from an isotropic noise distribution is added to either the gradient or the iterate such that the caused perturbation helps in escaping the saddle point.

As accelerating stochastic gradient descent in the presence of saddle points also involves perturbation, we hypothesize that it should also inherently provide some privacy guarantees. In this work, we quantify the privacy guarantees that are obtained by employing an algorithm that perturbs the gradient with the aim of escaping saddle points. Using the pattern observed in works to escape saddle points, we provide a generic format of the Perturbed gradient descent (PrGD) algorithm. Our major contribution is identifying and quantifying the privacy provided by the PrGD algorithm. We also quantify the privacy obtained by adding noise from a $d$-dimensional ball which, to the best of our knowledge, has not been done before. In the forthcoming section, we discuss some basic definitions from both the optimization and the differential privacy literature. We provide the problem setting and provide a generic algorithm to escape saddle points. We then provide the privacy guarantees of the algorithms and discuss the results.

\section{Definitions and background}
Differential privacy was introduced to formally provide privacy guarantees. \rev{We initially define some terms regarding differential privacy from the seminal work \cite{dwork2014algorithmic}:}
\begin{definition}[Neighbouring databases]
    Two databases $S$ and $S'$ are said to be neighbouring databases if they differ by a single entry. The maximum distance between the databases is denoted by $\Delta x$.
    \begin{equation}
        \Delta x = \max_{x \in S, x' \in S'} \norm{x - x'}^2
    \end{equation}
\end{definition}
\begin{definition}[$(\epsilon,\delta)$-private mechanism]
    A randomized mechanism $\mathcal{M}$ with range $\mathcal{R}$ is said to preserve $(\epsilon,\delta)$ privacy, if for all pairs of neighbouring databases $S$ and $S'$ and for any $\mathcal{A} \subset \mathcal{R}$,
    \begin{equation}
        Pr(\mathcal{M}(S)  \in \mathcal{A}) \leq \exp(\epsilon)  Pr(\mathcal{M}(S')  \in \mathcal{A}) + \delta
    \end{equation}
\end{definition}
Note that when $\epsilon=0$, we get $(0,\delta)$ privacy which is formally defined as follows
\begin{definition}[$(0,\delta)$-private mechanism]
    A randomized mechanism $\mathcal{M}$ with range $\mathcal{R}$ is said to preserve $(0,\delta)$ privacy, if for all pairs of neighbouring databases $S$ and $S'$ and for any $\mathcal{A} \subset \mathcal{R}$,
    \begin{equation}
        |Pr(\mathcal{M}(S)  \in \mathcal{A}) -  Pr(\mathcal{M}(S')  \in \mathcal{A}) | \leq  \delta
    \end{equation}
\end{definition}
The privacy measure, $\delta$, is also known as the total variation distance of the query output for neighbouring databases. Note that for a given $\delta$, $(0,\delta)$ privacy is a stronger guarantee than $(\epsilon,\delta)$ privacy. We also bring to note that a smaller value for \(\delta\) implies greater privacy.

We now define the necessary terms in the optimization framework.
\begin{definition}[Stationary points]
    For a differentiable function $f(·)$, we say that $x$ is a first-order stationary point if $||\nabla f(x)|| = 0$ and a second-order stationary point if $\lambda_{min}(\nabla^2 f(x)) \geq 0$.
\end{definition}
The iterative gradient methods such as gradient descent, stochastic gradient descent, RMSProp, Adam, etc. are first-order methods that guarantee convergence to the first-order stationary point. For a convex objective, convergence to the first-order stationary point guarantees convergence to the minimum. However, for a non-convex function, a first-order stationary point may be a maximum, minimum or a saddle point. It has been observed that the presence of saddle points greatly impede the convergence of gradient descent \cite{choromanska2015loss}.
\begin{definition}[Strict saddle]
    We say $x$ is a saddle point if it is a first-order stationary point, but not a local minimum. Moreover, we say that a saddle point is strict if $\lambda_{min}(\nabla^2 f(x)) < 0$. 
\end{definition}
A strict saddle implies that there is a direction of functional decrease and hence, there is a chance that the perturbation may help the iterate escape the saddle point.

In the forthcoming sections, we discuss the problem setting where we consider a non-convex objective function (this implies that there may be saddle points in play) and a typical perturbed gradient descent algorithm to minimize it. We then quantify the privacy guarantees provided by this algorithm.

\section{Problem setting} 
    We consider the empirical risk minimization problem given a database. We denote the database as a set of points $(x_i,y_i)$ where $x_i \in \mathbb{R}^d$ and $y_i \in \mathbb{R}$ for $i = 1,...N$. We aim to learn the function \(f\) from the given data; let the function \(f\) be parameterized by \(w\) which are tuned using an iterative optimization algorithm. The minimization objective can be written as a function of the inputs $x_i$'s and the parameters $w$. The problem is to minimize the empirical loss $L(w)$ which is given by 
    \begin{equation} \label{eqn:loss}
        L(w) = \dfrac{1}{N} \sum_{i=1}^N \ell(f_w(x_i),y_i)
    \end{equation}
     Here, the loss function $\ell$ is the loss incurred for predicting $f_w(x_i)$ when the given output is $y_i$. 
     We do not assume the convexity of the objective to be minimized. As the objective function $L(w)$ may be non-convex, there arises the problem of saddle points \cite{dauphin2014identifying} which significantly slows down the convergence of the optimizer. We make the following assumptions about the objective:
     \begin{enumerate}
         \item The loss is bounded, is \(\beta\)-smooth and has \(\rho\)-Lipschitz Hessian.
         \item All the saddle points are strict
     \end{enumerate}
     These assumptions are significant in proving the convergence of the perturbed gradient descent algorithm to a local minimum and do not affect the privacy guarantees provided.
    In the process, the privacy of the database also needs to be preserved. Privacy is said to be compromised if an adversary can identify whether an individual entry belongs to the given database based on the output of an algorithm given a pair of neighbouring databases. In our problem, we consider a pair of databases with a maximum distance between their gradients as \(\Delta x\). In the subsequent section, we analyze the privacy guarantees of the iterative optimization algorithm employed to minimize the objective listed in Eq. \ref{eqn:loss}.
    
    \section{Algorithm and guarantees}
    In this section, we list a perturbed version of the stochastic gradient descent algorithm that is employed to minimize a non-convex objective.  Previously, perturbation was added to the gradient to escape saddle points in \cite{Ge2015,Jin2017a,levy2016power}; \blue{the noisy/perturbation added to the gradient is a sample from a unit ball.} Also, from the work done in \cite{Sarwate2013,abadi2016deep,bassily2014private}, we note that privacy can be preserved by adding any noise/perturbation to the gradient values. \blue{Previous works on differential privacy such as \cite{dwork2014algorithmic} deal with the addition of Gaussian or Laplacian noise to the data and computing the resulting privacy. Even in the context of gradient descent, \cite{bassily2014private} deals with the addition of Gaussian noise to the gradient. However, to escape a saddle point, isotropic noise (especially noise sampled from a unit ball) is advocated in \cite{Jin2017a}.} Therefore, in this work, it is shown that both privacy and faster convergence can be achieved in the presence of saddle points when perturbation \blue{sampled from a unit ball} is added to the gradient. A version of the perturbed gradient descent algorithm is presented below:
    \begin{algorithm}[H]   
                \caption{Perturbed gradient descent (PrGD)}    \label{alg:PrGDAlg}
                \begin{algorithmic}[1]
                    \State \textbf{Input}: Initial parameters $w_0$
                    \For { t = 1,2,...T} 
                    \State  Choose a data point $(x_i, y_i)$ uniformly at random from the $N$ available data points
                    \State Sample $n_t$ from the volume of a unit ball 
                    \State $w_{t+1} \leftarrow w_t - \eta(\bigtriangledown \ell(f_w(x_i),y_i) +n_t )$
                    \EndFor
                \end{algorithmic}
            \end{algorithm}
            
    Note that this is the typical format of any perturbed gradient descent algorithm \cite{Ge2015,abadi2016deep} where the distribution of the added noise (usually isotropic) varies. Our major contribution is to show that the PrGD algorithm also inherently provides privacy guarantees. We derive the privacy guarantees provided by Algorithm \ref{alg:PrGDAlg} below.
    \subsection{Privacy guarantees}  
    \blue{Let $\nabla (x_i)_t = \bigtriangledown \ell(f_{w_t}(x_i),y_i)$ denote the gradient for input \(x_i\) at iteration \(t\); for simplicity, we drop the subscripts and denote the gradients of two different inputs \(x,x'\) as \(\nabla x, \nabla x'\) respectively. With a slight overload of notation, we assume that the maximum difference in the gradients $\nabla x$ and $\nabla x'$ is also denoted by $\Delta x$, i.e., $|\nabla x' - \nabla x| \leq \Delta x$.} The farther the points, the easier it is for an adversary to distinguish between them. Therefore, the worst-case analysis is done for $|x' - x| = \Delta x$.
    \begin{theorem}
        \blue{Let \(\Delta x <2\)}. Algorithm \ref{alg:PrGDAlg} provides $(0,\hat{\delta})$- differential privacy guarantees where $$\hat{\delta} = \dfrac{T}{N} \left[ 1 - I_{1-(\Delta x/2)^2} \left( \dfrac{d+1}{2}, \dfrac{1}{2} \right) \right]$$
        where $\Delta x$ denotes the maximum gradient distance between two neighbouring databases \blue{and $I$ is the regularized incomplete beta function defined in  \cite{BetaRegularized}. }
    \end{theorem}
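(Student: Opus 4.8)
The plan is to read $(0,\hat\delta)$ privacy as a total-variation bound and then reduce the whole $T$-step trajectory to a single-step geometric computation amplified by a coupling argument. Since $(0,\delta)$ privacy is precisely the statement that the total variation distance between the output distributions $\mathcal{M}(S)$ and $\mathcal{M}(S')$ is at most $\delta$, I would first fix attention on one iteration. Conditioned on the current iterate $w_t$ and on the differing record being the one selected at iteration $t$, the perturbed update produces a point distributed uniformly over a ball of radius $\eta$ centered at $w_t-\eta\nabla x$ (respectively $w_t-\eta\nabla x'$). These two centers differ by $\eta\norm{\nabla x - \nabla x'}\le \eta\,\Delta x$, so after the common factor $\eta$ cancels, the single-step total variation distance depends only on two unit balls whose centers lie at distance $\Delta x$. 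For two uniform laws on equal-volume balls this distance equals $1 - V_\cap/V_d$, where $V_\cap$ is the volume of the lens-shaped intersection and $V_d$ the volume of the unit ball; the hypothesis $\Delta x < 2$ guarantees the intersection is nonempty.

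The core computation, and what I expect to be the main obstacle, is evaluating $V_\cap/V_d$ in closed form as the regularized incomplete beta function. By symmetry the intersection is the union of two congruent spherical caps, each cut off by the bisecting hyperplane at distance $\Delta x/2$ from a center. Slicing a cap perpendicular to the line joining the centers expresses its volume as $V_{d-1}\int_{\Delta x/2}^{1}(1-t^2)^{(d-1)/2}\,dt$. The substitution $u=t^2$ turns this into an incomplete beta integral with parameters $(1/2,(d+1)/2)$, and doubling for the two caps together with the normalization by $V_d$ yields $V_\cap/V_d = 1 - I_{(\Delta x/2)^2}\!\left(1/2,(d+1)/2\right)$. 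Applying the reflection identity $I_x(a,b)=1-I_{1-x}(b,a)$ rewrites this as $V_\cap/V_d = I_{1-(\Delta x/2)^2}\!\left((d+1)/2,1/2\right)$, so the single-step total variation distance is exactly $1 - I_{1-(\Delta x/2)^2}\!\left((d+1)/2,1/2\right)$, the bracketed term in the statement. Getting the beta parameters and the argument $1-(\Delta x/2)^2$ right through these substitutions and the reflection identity is the delicate part.

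Finally I would stitch the $T$ steps together by a coupling and union bound. Run both processes with the same index selection and, whenever the selected index is not the differing record, the same noise, so that the iterates stay identical; when the differing record is selected, use the maximal coupling of the two uniform ball-laws, which fails to agree with probability equal to the single-step total variation distance. Conditioned on no prior divergence, the two trajectories can diverge at step $t$ only if the differing record is chosen (probability $1/N$) and the maximal coupling fails (probability $1-I_{1-(\Delta x/2)^2}((d+1)/2,1/2)$). A union bound over the $T$ iterations then bounds the probability that the coupled trajectories ever differ, hence the total variation distance between $\mathcal{M}(S)$ and $\mathcal{M}(S')$, by $\hat\delta = \frac{T}{N}\left[1-I_{1-(\Delta x/2)^2}\!\left(\frac{d+1}{2},\frac{1}{2}\right)\right]$; outputting only $w_T$ is post-processing and cannot increase this distance. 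The remaining care is that divergence need only be controlled up to its first occurrence, since the $1/N$ factor applies to each step conditioned on the two runs still being coupled.
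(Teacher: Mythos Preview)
Your proposal is correct and follows the same three-stage structure as the paper: a single-step geometric total-variation computation, amplification by the $1/N$ sampling probability, and accumulation over $T$ steps. The geometric step is identical in content---you derive the spherical-cap volume by slicing and the substitution $u=t^2$, whereas the paper simply cites the cap-volume formula from \cite{li2011concise}, but both land on the same incomplete-beta expression and the same reflection identity. Where you genuinely differ is in the last two stages: the paper invokes the privacy-amplification-by-sampling result and the adaptive strong composition theorem as black boxes, while you prove both directly via a maximal coupling together with a union bound. For $(0,\delta)$ mechanisms these are the same argument underneath---amplification by sampling at rate $1/N$ is exactly your coupling that matches whenever the differing index is not drawn, and composition of $(0,\delta_i)$ mechanisms is exactly the union bound over failure events---so your route is a self-contained unpacking of the theorems the paper cites. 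Your version buys an elementary, dependency-free proof (and the explicit observation that the learning rate $\eta$ cancels, which the paper glosses over); the paper's version buys brevity and a direct link to the standard differential-privacy toolkit.
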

    \begin{proof}
        We approach the quantification of privacy in three steps. We characterize the privacy provided by the addition of a sample from the volume of a unit ball for a single step. Then, we incorporate the fact that a single time step uses only one of the $N$ available data points through random sampling. Finally, the composition theorem is used to characterize the cumulative privacy over the $T$ time steps. 
        \begin{itemize}
            \item Privacy guarantees for addition of noise sampled from the volume of a unit ball: \\
            To establish privacy guarantees, let us initially consider the privacy obtained at an iteration $t$ for a data sample \(x_i\).  As the noise is added from a unit ball, if \(\Delta x\ > 2\), the points can always be distinguished. Therefore, we assume that \(\Delta x\ < 2\). \\
            Let $\nabla \hat{x} = \nabla x +n $ and $\nabla \hat{x}' = \nabla x'+n$ and $|\nabla x' - \nabla x| \leq \Delta x$. Here $n$ is a noise vector sampled from the volume of a unit sphere as demonstrated in Fig. \ref{fig:VolumeUnitBall}. We aim to prove that $|Pr(\nabla \hat{x} \in \Omega) - Pr( \nabla \hat{x}' \in \Omega)| \leq \delta$ for any $\Omega \subseteq \mathbb{R}^d$.
 
The volume of a unit ball in $d$ dimensions is given by the following formula
\begin{equation}
    V_d = \dfrac{\pi^{d/2}}{\Gamma(1+ \frac{d}{2})}.
\end{equation}
\begin{figure}
    \centering
    \begin{tikzpicture}
        \fill[black!40!white, opacity = 0.5] (1.5,0) circle (2cm);
        \fill[black!40!white, opacity = 0.5] (-1,0) circle (2cm);
        \draw[dashed] (-1,3) -- (-1,-3) node[below]{$x'$};
         \draw[dashed](1.5,3) -- (1.5,-3)  node[below]{$x$};
         \draw[dashed] (.25,1.5) -- (.25,-1.5);
         \draw[<->] (-1,2.5) -- (1.5,2.5);
         \node at (0.25,2.5) [above] {$\Delta x$};
    \end{tikzpicture}
    \caption{Distribution of $\hat{x}$ and $\hat{x}'$}
     \label{fig:VolumeUnitBall}
\end{figure}
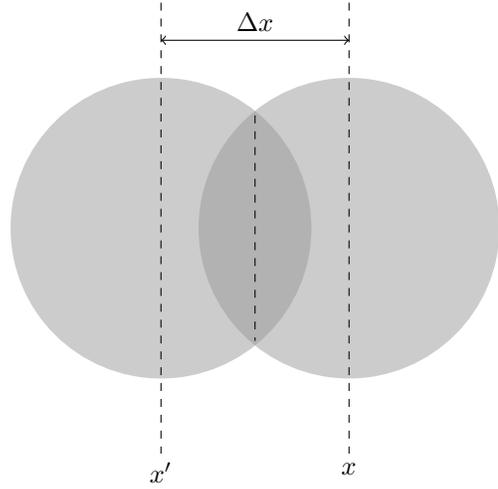

Now, we find the volume of intersection of two hyperspheres with unit radii and centers located at a Euclidean distance of $\Delta x$. This can be obtained as a sum of two hypersphere caps. The volume of a hypersphere cap is derived in \cite{li2011concise} and is given by
\begin{equation*}
    V_d^{cap}(a) = \dfrac{1}{2} \dfrac{\pi^{d/2}}{\Gamma(1+ \frac{d}{2})}  I_{1-a^2} \left( \dfrac{d+1}{2}, \dfrac{1}{2} \right)
\end{equation*}
where \(I\) is the regularized incomplete beta function and $a$ is the difference between the radius of sphere and the height of the cap; here, $a = \frac{\Delta x}{2}$.
\blue{The overlapping volume will be twice of that as the spherical cap as shown in Fig. \ref{fig:VolumeUnitBall}.}
\begin{equation*}
    V_d^{overlap} = \dfrac{\pi^{d/2}}{\Gamma(1+ \frac{d}{2})}  I_{1-a^2} \left( \dfrac{d+1}{2}, \dfrac{1}{2} \right).
\end{equation*}
Consider a set $\Omega \in \mathbb{R}^d$.
\begin{align*}
    P(\nabla \hat{x} \in \Omega) = \dfrac{\text{Volume of set $\Omega$ overlapping the sphere}}{\text{Volume of sphere}}
\end{align*}

\noindent
\rev{The maximum difference between $Pr(\nabla \hat{x} \in \Omega)$ and $Pr(\nabla \hat{x}' \in \Omega)$ is obtained when the set $\Omega$ is the non-overlapping volume of either one hypersphere. The maximum value for \(\delta\) signifies signifies the worst-case privacy.}
\begin{align*}
    \delta &= \dfrac{ \dfrac{\pi^{d/2}}{\Gamma(1+ \frac{d}{2})}  - \text{Overlapping volume}}{ \dfrac{\pi^{d/2}}{\Gamma(1+ \frac{d}{2})} } \\
    &=  \dfrac{ \dfrac{\pi^{d/2}}{\Gamma(1+ \frac{d}{2})}  - \dfrac{\pi^{d/2}}{\Gamma(1+ \frac{d}{2})}  I_{1-a^2} \left( \dfrac{d+1}{2}, \dfrac{1}{2} \right)}{ \dfrac{\pi^{d/2}}{\Gamma(1+ \frac{d}{2})}} \\
    &= 1 - I_{1-a^2} \left( \dfrac{d+1}{2}, \dfrac{1}{2} \right)
\end{align*}
\rev{Using the identity of regularized incomplete beta function \cite{BetaRegularized}, \(I_z(a,b) = 1 - I_{1-z}(b,a)\) we rewrite the above expression as }
\begin{equation} \label{eqn:deltaBall}
    \delta = I_{(\Delta x/2)^2} \left(  \dfrac{1}{2}, \dfrac{d+1}{2} \right).
\end{equation}
\rev{We analyse the effect of variation of the dimension \(d\) and \(\Delta x\) on the privacy \(\delta\) in the next subsection.}
\noindent
\item Effect of random sampling of data points on privacy:\\
As each data point is sampled at random from a set of $N$ data points, we obtain improved privacy guarantees. According to the privacy amplification theorem employed in \cite{abadi2016deep}, the privacy guarantee offered at each step is now $(0,\delta/N)$. 

\item Privacy over $T$ time steps:\\
As the adversary can view multiple input-output pairs over the evolution of the algorithm, there is a compromise on the privacy of the database. This is characterized by the strong composition theorem \cite{kairouz2017composition}. The adaptive composition theorem can be applied when the adversary has information about the databases as well as the mechanism employed by the differentially private agent; in addition, the adversary is allowed to modify its future queries based on the outputs it sees. 
The parameter $w$ for future queries is affected by the output of the differentially private agent. \blue{The adaptive strong composition theorem states that the composition of $K$ mechanisms each providing $(\epsilon_i,\delta_i)$ for $i = 1,...K$ results in a mechanism with privacy \((\sum_{i=1}^K\epsilon_i,\sum_{i=1}^K\delta_i)\).} A direct application of the adaptive composition theorem to our problem results in a $T$-fold composition of equivalent $(0,\delta/N)$ mechanisms. The final guarantee that we get is 
\begin{equation} \label{eqn:overallPrivacy}
    \hat{\delta} = \sum_{i = 1}^T \delta /N = \dfrac{T}{N} \left[ I_{(\Delta x/2)^2} \left(  \dfrac{1}{2}, \dfrac{d+1}{2} \right) \right]
\end{equation}

        \end{itemize}
        \textbf{Note:} When noise sampled from the surface of a unit ball is added to the gradient instead of the volume as done in \cite{Ge2015}, we will not be able to provide privacy guarantees. This is because the output achieved after the addition of noise, i.e., $\hat{x} = x +n $ will be exactly a distance of 1 unit away from $x$. Therefore, the adversary can easily detect which of the two databases contributed to the specific output. Hence, we consider noise sampled from the volume of a unit ball.
    \end{proof}
    
    \subsubsection{Impact of \(d\) and \(\Delta x\) on privacy}
    \rev{The privacy parameter \(\delta\) varies from 0 to 1, where \(0\) corresponds to the case of maximum privacy (when the outputs from neighbouring databases are indistinguishable) and \(1\) corresponds to minimum privacy (where the outputs can be surely distinguished). In this section, we consider the expression for privacy obtained by adding noise sampled from a unit ball as derived in Eq. \ref{eqn:deltaBall} and analyze the effect of the parameters \(\Delta x\) and \(d\) on the privacy metric \(\delta\). Note that as the overall privacy derived in Eq. \ref{eqn:overallPrivacy} is a scaled version of Eq. \ref{eqn:deltaBall}, the same trend applies for the overall privacy as well.}
    
    \rev{From \cite{BetaRegularized}, the expansion of the regularized incomplete beta function is given by }
    \begin{equation}
        I_z(a,n) = z^a \sum_{k=0}^{n-1} \dfrac{(a)_k (1-z)^k}{k!}, \quad n \in \mathbb{N}.
    \end{equation}
    \rev{Applying the above expansion to Eq. \ref{eqn:deltaBall}, under the assumption that \(d\) is odd to ensure that \(n \in \mathbb{N}\), we have }
    \begin{equation} \label{eqn:exp}
        \delta = \dfrac{\Delta x}{2} \sum_{k=0}^{\frac{d-1}{2}} \dfrac{(\frac{1}{2})_k (1-(\frac{\Delta x}{2})^2)^k}{k!}.
    \end{equation}
    \blue{Note that the assumption on \(d\) is only made to study the trend and is not a requirement for Eq. \ref{eqn:overallPrivacy} to hold.} For a fixed \(\Delta x\), let us initially study the impact of the dimension \(d\). Let us consider \(d=1\). This simplifies to the addition of uniform noise from the interval \([-1,1]\). On substituting \(d=1\) in Eq. \ref{eqn:exp}, we obtain \(\delta = \Delta x/2\) which corroborates with the result obtained for \((0,\delta)\)-privacy in case of a uniform distribution in \cite{He2017}. To analyse the variation of the quantity \(\delta\) with \(d\), let us consider \(d = 1,3,5,\cdots\). For \(d=3\), we obtain terms corresponding to \(k=0,1\) in the summation. Note that all the terms in the summation are positive and hence, as \(d\) increases, more terms get added to the summation, the value of \(\delta\) increases for the same value of \(\Delta x\). As \(\Delta x/2 < 1\), we observe that \( (1-(\frac{\Delta x}{2})^2)^k\) decreases with increasing \(k\). This results in an increase in \(\delta\) thereby resulting in decrease of privacy. 
    
    \rev{We then analyse how \(\delta\) varies with the quantity \(\Delta x\). Note that \(\dfrac{\Delta x}{2}\) is an increasing quantity in \(\Delta x\) whereas \((1-(\frac{\Delta x}{2})^2)\) is decreasing in \(\Delta x\). Therefore, we rely on the sign of the gradient to characterize if \(\delta\) is an increasing or a decreasing function in \(\Delta x\). Using the differentiation formulas for Beta regularized functions in \cite{BetaRegularized}, we have}
    \begin{equation} \label{eqn:betaincDiff}
        \dfrac{d I_{(\frac{\Delta x}{2})^2} (\frac{1}{2},\frac{d+1}{2})}{d(\frac{\Delta x}{2})^2} = \dfrac{(1-(\frac{\Delta x}{2})^2)^{\frac{d-1}{2}} (\frac{\Delta x}{2})^{-1}}{B(\frac{1}{2},\frac{d+1}{2})},
    \end{equation}
    \rev{where \(B(a,b)\) denotes the beta function as defined in \cite{BetaFnc}.
    We note that all the terms in Eq. \ref{eqn:betaincDiff} are positive for all values of \(d\). As the gradient is positive, we can conclude that \(\delta\) is an increasing function of \(\Delta x\) for any given \(d\). These observations are further confirmed by plotting the variation of \(\delta\) with \(\Delta x\) for different dimensions in Fig \ref{fig:deltaFig}.}
    
      \begin{figure}[ht]
        \centering
        \includegraphics[scale = 0.5]{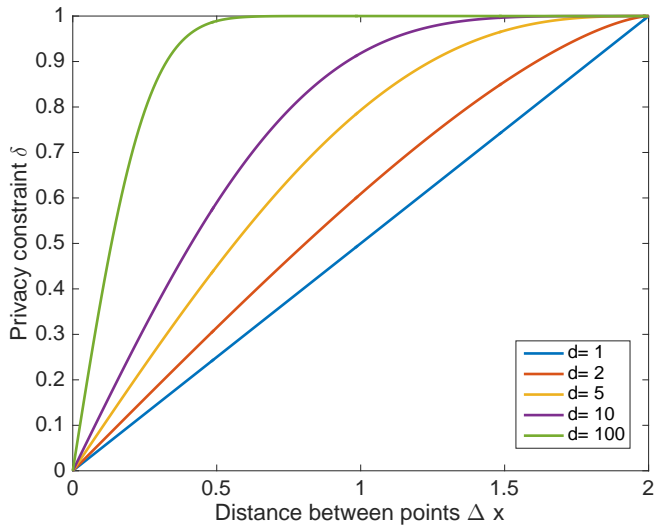}
        \caption{Variation of privacy with \(\Delta x\) for different \(d\)}
        \label{fig:deltaFig}
    \end{figure}
    
    \subsection{Convergence guarantees}
    We can guarantee the convergence of the SGD algorithm arbitrarily close to a local minimum by following the analysis in \cite{Ge2015}.
    The result in \cite{Ge2015} is reproduced here for convenience:
    \begin{theorem}
    Suppose a function $f(w) : \mathbb{R}^d \rightarrow \mathbb{R}$ that is strict saddle, and has a stochastic gradient oracle where the noise satisfy $\mathbb{E}[nn^T] = \sigma^2 I$ for some $\sigma$. Further, suppose the function is bounded by $|f(w)| \leq B$, is $\beta$-smooth and has $\rho$-Lipschitz Hessian. Then there exists a threshold $\eta_{max}$, so that for any $\zeta >0$, and for any $\eta \leq \eta_{max}/\max\{1, log(1/\zeta)\}$, with probability at least $1 - \zeta$ in $t = O(\eta^{-2} log(1/\zeta))$ iterations, SGD outputs a point $w_t$ that is $O(\sqrt{ \eta log(1/\eta \zeta)})$-close to some local minimum $w^*$.
\end{theorem}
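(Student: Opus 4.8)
The plan is to follow the three-region decomposition of the parameter space that underlies strict-saddle analyses. First I would partition $\mathbb{R}^d$ according to the local geometry: the large-gradient region where $\norm{\nabla f(w)}$ exceeds a fixed threshold, the saddle region where the gradient is small but $\lambda_{min}(\nabla^2 f(w)) < -\gamma$ for some $\gamma > 0$ (such a $\gamma$ exists by the strict-saddle assumption), and the local-minimum region where both the gradient is small and the Hessian is positive definite. The strategy is to show that the iterates make guaranteed progress in expectation in the first two regions and become trapped near a local minimum in the third, and then to combine these facts through a supermartingale argument to obtain the high-probability global statement.

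For the large-gradient region I would invoke $\beta$-smoothness to obtain the descent inequality
\begin{equation*}
\mathbb{E}[f(w_{t+1})] \leq f(w_t) - \eta \norm{\nabla f(w_t)}^2 + \frac{\eta^2 \beta}{2}\, \mathbb{E}\!\left[\norm{\nabla f(w_t) + n_t}^2\right].
\end{equation*}
Since $\mathbb{E}[n_t] = 0$ the cross term vanishes and, because $\mathbb{E}[n_t n_t^\top] = \sigma^2 I$, the noise contributes only a second-order $O(\eta^2)$ term. For $\eta$ below the threshold $\eta_{max}$, the first-order term dominates, so every step taken in this region decreases $f$ in expectation by $\Omega(\eta)$.

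The genuinely hard part is the saddle region, where the gradient alone guarantees no descent and the decrease must instead be harvested from the injected isotropic noise interacting with the negative-curvature direction. Here I would track the projection of the displacement $w_t - w_{\mathrm{saddle}}$ onto the eigenspace of the most negative eigenvalue of the Hessian. The linearized update contracts by $(I - \eta\nabla^2 f)$, so along that eigendirection the component is \emph{amplified} by a factor $(1 + \eta\gamma)$ per step; an initial noise-seeded perturbation of size $\Omega(\sigma)$ therefore grows geometrically and the iterate escapes the $O(\sqrt{\eta})$-ball around the saddle within $O(\eta^{-1}\log(1/\eta))$ steps. The $\rho$-Lipschitz Hessian assumption is precisely what lets me bound the error incurred by replacing the true gradient with its linearization over this neighbourhood. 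The main obstacle is establishing that this escape occurs with high probability rather than the iterate lingering in the flat directions of small curvature; I would address it with a coupling and concentration argument showing that the accumulated isotropic noise almost surely seeds a non-negligible component along the escaping eigenvector.

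Finally I would assemble the pieces. Near a local minimum the positive-definite Hessian contracts the iterate toward $w^\ast$ up to a noise floor of order $\sqrt{\eta \log(1/\eta\zeta)}$, which is the claimed accuracy. Since $f$ is bounded by $|f| \leq B$ and decreases by $\Omega(\eta)$ per step outside the local-minimum region, the total number of productive steps is $O(B/\eta)$; charging the saddle-escape episodes and applying an Azuma--Hoeffding bound to the associated supermartingale to convert the per-step expected decrease into a high-probability statement yields the iteration count $t = O(\eta^{-2}\log(1/\zeta))$ together with the overall failure probability $\zeta$. The logarithmic factors, and in particular the scaling of $\eta_{max}$ with $\max\{1,\log(1/\zeta)\}$, emerge directly from this martingale concentration step.
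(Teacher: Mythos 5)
You should know at the outset that the paper does not prove this theorem: it is quoted verbatim from \cite{Ge2015} (``The result in \cite{Ge2015} is reproduced here for convenience''), and the paper's only original content around it is the observation that Algorithm \ref{alg:PrGDAlg} satisfies its hypotheses, since noise uniform on the volume of a unit ball is isotropic and so meets $\mathbb{E}[nn^T]=\sigma^2 I$. So your proposal has to be judged against the proof in the cited reference, and at the structural level it is faithful to it: the three-region decomposition driven by the strict-saddle property, the $\beta$-smoothness descent inequality in the large-gradient region, the $(1+\eta\gamma)$ per-step amplification of the noise-seeded component along the most negative eigendirection with the $\rho$-Lipschitz Hessian controlling the linearization error, trapping by local strong convexity near a minimum, and a supermartingale/Azuma--Hoeffding step to convert expected decrease into a high-probability claim --- these are exactly the ingredients of the argument in \cite{Ge2015}.

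There is, however, a genuine quantitative inconsistency in your accounting. You define the large-gradient region by a fixed, $\eta$-independent threshold, claim an expected decrease of $\Omega(\eta)$ per step there, and conclude $O(B/\eta)$ productive steps --- but that yields $t=O(\eta^{-1})$, not the claimed $t=O(\eta^{-2}\log(1/\zeta))$; the martingale concentration step you then invoke supplies the $\log(1/\zeta)$ factor and the $\eta_{max}/\max\{1,\log(1/\zeta)\}$ step-size scaling, not an extra factor of $\eta^{-1}$. In the actual proof the region boundaries scale with $\eta$: your own descent inequality gives net expected decrease only once $\norm{\nabla f(w_t)}^2 \gtrsim \eta\beta d\sigma^2$, i.e.\ the gradient threshold is of order $\sqrt{\eta}$ up to constants and dimension factors, and the per-step decrease in that region is then $\Omega(\eta^2)$, which is where $\eta^{-2}$ comes from. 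The $\eta$-dependent thresholds are also what make the final accuracy $O(\sqrt{\eta\log(1/\eta\zeta)})$ fall out, since a point excluded from both progress regions is automatically inside the $\eta$-dependent neighbourhood of a local minimum; with your fixed threshold, iterates whose gradient norm lies between $\Theta(\sqrt{\eta})$ and that threshold are unaccounted for, and neither the iteration count nor the accuracy claim follows. Separately, your saddle-region escape argument correctly identifies the crux (high-probability, not merely expected, escape) but defers it to an unspecified ``coupling and concentration argument''; in \cite{Ge2015} this is the bulk of the technical work, so as written your proposal is an accurate roadmap rather than a proof.
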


    The property of strict saddle and equivalence of local and global minima is commonly encountered, especially when we deal with an over-parameterized shallow neural network \cite{du2018power}. Also, as the noise is sampled uniformly at random from the volume of a unit ball, it is isometric. That automatically satisfies the requirement $\mathbb{E}[nn^T] = \sigma^2 I$. Therefore, Algorithm \ref{alg:PrGDAlg} is guaranteed to output a point that is close to a local minimum.
    
    \subsection{Discussion}
     The privacy guarantee provided is $(0,\delta)$ and not $(\epsilon,\delta)$, as the perturbation added was from a bounded distribution, the unit ball. The unit ball for a single dimension corresponds to the uniform distribution. \rev{Here, we quantified the privacy achieved by adding noise from a unit ball. However, if we wish to address the alternate problem, i.e., if a certain privacy guarantee is desired, the radius of the $d$-dimensional ball can be appropriately scaled to achieve it.} The convergence guarantees of the algorithm still holds as long as the noise added is isotropic.
    
    \section{Conclusion}
    \rev{This work aims to bring out the inherent privacy provided by the algorithm which perturbs for saddle point escape in a non-convex setting and hence the factor $\delta$ for a typical saddle point escape algorithm is derived. Our major contribution lies in quantifying the \((0,\delta)\) privacy achieved by adding noise randomly sampled from a \(d\)-dimensional ball which has not been attempted before. We also analyze the change in privacy with varying dimensions. We also quantify the overall privacy obtained when the PrGD algorithm is applied to a database over \(T\) time steps while providing convergence guarantees.}
\bibliography{library.bib}
\bibliographystyle{IEEEtran}

\end{document}